\documentclass{article} 
\usepackage[preprint]{colm2026_conference}
\usepackage{multirow} 
\usepackage{microtype}
\usepackage{hyperref}
\usepackage{url}
\usepackage{booktabs}
\usepackage{amsmath}

\usepackage{amssymb}
\usepackage{subcaption}
\usepackage{graphicx}
\newtheorem{theorem}{Theorem}

\newenvironment{proof}[1][Proof]{\noindent\textbf{#1.} }{\hfill$\square$\par\medskip}

\newtheorem{remark}[theorem]{Remark}

\usepackage{lineno}

\definecolor{darkblue}{rgb}{0, 0, 0.5}
\hypersetup{colorlinks=true, citecolor=darkblue, linkcolor=darkblue, urlcolor=darkblue}

\title{Adaptive Stopping for Multi-Turn LLM Reasoning}
\newcommand{\squishlist}{
  \begin{list}{$\bullet$}{
    \setlength{\itemsep}{0pt}
    \setlength{\parsep}{3pt}
    \setlength{\topsep}{3pt}
    \setlength{\partopsep}{0pt}
    \setlength{\leftmargin}{1.5em}
    \setlength{\labelwidth}{1em}
    \setlength{\labelsep}{0.5em}
  }
}

\newcommand{\squishlisttwo}{
  \begin{list}{$\bullet$}{
    \setlength{\itemsep}{0pt}
    \setlength{\parsep}{0pt}
    \setlength{\topsep}{0pt}
    \setlength{\partopsep}{0pt}
    \setlength{\leftmargin}{2em}
    \setlength{\labelwidth}{1.5em}
    \setlength{\labelsep}{0.5em}
  }
}

\newcommand{\squishend}{
  \end{list}
}

\author{Xiaofan Zhou \\
Department of Computer Science\\
University of Illinois Chicago\\
Chicago, IL 60607, USA \\
\texttt{xzhou77@uic.edu} \\
\And
Huy Nguyen \\
Data science \\
Augustana College \\
Rock Island, IL 61201, USA \\
\texttt{huynguyen23@augustana.edu} \\
\AND
Bo Yu \\
Department of Civil and Environmental Engineering \\
University of Utah \\
Salt Lake City, UT 84112, USA \\
\texttt{bo.yu@utah.edu}
\AND
Chenxi Liu \\
Department of Civil and Environmental Engineering \\
University of Utah \\
Salt Lake City, UT 84112, USA \\
\texttt{chenxi.liu@utah.edu}
\AND
Lu Cheng \\
Department of Computer Science\\
University of Illinois Chicago\\
Chicago, IL 60607, USA \\
\texttt{lucheng@uic.edu}
}

\begin{document}

\ifcolmsubmission
\linenumbers
\fi

\maketitle

\begin{abstract}
Large Language Models (LLMs) increasingly rely on multi-turn reasoning and interaction, such as adaptive retrieval-augmented generation (RAG) and ReAct-style agents, to answer difficult questions. These methods improve accuracy by iteratively retrieving information, reasoning, or acting, but introduce a key challenge: \textbf{When should the model stop?} Existing approaches rely on heuristic stopping rules or fixed turn budgets and provide no formal guarantees that the final prediction still contains the correct answer. This limitation is particularly problematic in high-stakes domains such as finance and healthcare, where unnecessary turns increase cost and latency, while stopping too early risks incorrect decisions.
Conformal prediction (CP) provides formal coverage guarantees, but existing LLM-CP methods only apply to a single model output and cannot handle multi-turn pipelines with adaptive stopping. To address this gap, we propose \textbf{M}ult\textbf{i}-Turn Language Models with \textbf{C}onformal \textbf{P}rediction (MiCP), the first CP framework for multi-turn reasoning. MiCP allocates different error budgets across turns, enabling the model to stop early while maintaining an overall coverage guarantee. We demonstrate MiCP on adaptive RAG and ReAct, where it achieves the target coverage on both single-hop and multi-hop question answering benchmarks while reducing the number of turns, inference cost, and prediction set size. We further introduce a new metric that jointly evaluates coverage validity and answering efficiency.
\end{abstract}

\section{Introduction}
Recent Large Language Models (LLMs) have become one of the most transformative technologies in scientific research \citep{rane2023contribution}, industry \citep{wang2025large}, and daily life assistance \citep{achiam2023gpt}. Beyond producing a direct answer in a single forward pass, modern LLM systems increasingly rely on multi-turn interaction and reasoning. When the initial context is insufficient, the model may iteratively retrieve additional evidence, generate intermediate reasoning steps, or interact with external tools before deciding on a final answer. Representative examples include adaptive Retrieval-Augmented Generation (RAG) \citep{moskvoretskii2025adaptive,DBLP:conf/aaai/LiTXCZY26}, which repeatedly retrieves documents until enough evidence has been gathered, and ReAct-style agents \citep{yao2022react}, which interleave reasoning and actions in a multi-turn loop.

These multi-turn strategies create a new challenge: deciding when to stop. If the model stops too early, it may fail to gather enough information and return an incorrect answer. If it continues for too many turns, the additional retrievals or reasoning steps increase latency, computation cost, and sometimes even introduce more errors. Existing approaches therefore, rely on heuristic stopping rules, such as confidence thresholds, fixed retrieval budgets, or manually designed criteria. However, these heuristics provide no formal guarantee that the resulting prediction still contains the correct answer.

The need for principled uncertainty quantification becomes especially important in high-stakes applications such as finance and healthcare. Conformal prediction (CP) \citep{angelopoulos2021gentle,zhou2025conformal} has recently emerged as a powerful framework to provide finite-sample coverage guarantees for LLM outputs. By calibrating a nonconformity score on held-out data, CP constructs a prediction set that contains the correct answer with a user-specified confidence. Existing LLM-CP methods have successfully been applied to short-answer tasks \citep{su-etal-2024-api,li-etal-2024-traq} and long-form generation \citep{mohri2024language}. Nevertheless, all prior approaches assume a single-shot prediction: the model produces one final answer, and CP is applied only after the entire process is complete.

In contrast, multi-turn LLM pipelines may terminate after different numbers of turns depending on the difficulty of the question. A conformal predictor for such systems must therefore answer two questions simultaneously: (1) whether the model should stop at the current turn, and (2) whether the resulting prediction set still satisfies the desired coverage guarantee. Naively applying standard CP independently at each turn fails to solve this problem. First, it does not provide a guarantee on the overall coverage across all possible stopping times. Second, the sampling cost grows combinatorially across turns. For example, if three candidate answers are sampled at each of five turns, then a naive approach would require $3^5$ sampled trajectories. Finally, practical systems often need to operate under a limited computation budget, requiring the model to abstain or return a ``cannot answer'' decision when sufficient confidence cannot be achieved.

To address these challenges, we propose \textbf{M}ult\textbf{i}-Turn Language Models with \textbf{C}onformal \textbf{P}rediction (MiCP), a multi-level CP framework for multi-turn LLMs reasoning. MiCP assigns distinct error budgets to different turns, enabling the model to stop early when sufficient evidence has been accumulated while still maintaining a rigorous overall coverage guarantee. Intuitively, early turns are encouraged to answer easy questions quickly, whereas later turns reserve additional error budget for more difficult questions that require further retrieval or reasoning. The framework naturally supports both adaptive RAG and ReAct-style agents, and can additionally incorporate a rejection option for questions that remain too uncertain within the allowed budget. We further introduce a new metric that explicitly reflects the trade-off between reliability and efficiency, making it particularly suitable for evaluating multi-turn reasoning systems.

In summary, our work makes the following contributions:
\squishlist
\item We propose MiCP, the first CP framework for multi-turn LLMs. MiCP provides formal coverage guarantees while simultaneously enabling early stopping in iterative pipelines.

\item We develop a multi-level error allocation strategy that distributes the overall conformal error budget across turns. This allows LLMs to decide whether to stop or continue at each turn while preserving the desired end-to-end coverage guarantee, even under a fixed computation budget and optional rejection mechanism.

\item We introduce a new evaluation metric for multi-turn LLMs that jointly measures coverage validity and efficiency. The metric explicitly rewards methods that answer correctly with fewer turns and penalizes unnecessary retrieval or reasoning steps.

\item Extensive experiments on both adaptive RAG and ReAct benchmarks demonstrate that MiCP consistently achieves the target coverage level while substantially reducing the number of turns, inference cost, and prediction set size on both single-hop and multi-hop question answering tasks.
\squishend
\section{Related Work}
\subsection{Multi-turn LLMs}
In multi-turn settings, LLMs can iteratively request additional information before committing to a final answer. A prominent category is adaptive RAG, where models dynamically determine whether to retrieve and how much text to retrieve from external sources based on their own output signals, rather than answering directly in a single pass. IRCoT \citep{DBLP:conf/acl/TrivediBKS23} interleaves retrieval with chain-of-thought reasoning, fetching additional passages when the reasoning steps so far are insufficient to produce the answer. Adaptive-RAG \citep{DBLP:conf/naacl/JeongBCHP24} trains a T5-large classifier to predict whether retrieval is needed and how many turns to perform. FLARE \citep{jiang-etal-2023-active} triggers retrieval whenever token-level generation probability falls below a threshold. DRAGIN \citep{su2024dragindynamicretrievalaugmented} leverages attention weights to identify salient keywords for formulating retrieval queries. Rowen \citep{DBLP:journals/corr/abs-2402-10612-rowen} uses cross-lingual answer consistency to decide whether retrieval is necessary, while SeaKR \citep{DBLP:journals/corr/abs-2406-19215} relies on internal hidden states for the same purpose. QucoRAG \citep{min2025quco} detects hallucination by measuring how frequently generated tokens overlap with training data. \citep{li2026retrievalgenerationunifiedframework} set when to retrieve, what to retrieve, when to stop, into control token without external controller. \citep{moskvoretskii2025adaptive} provides a comprehensive study on how uncertainty estimation affects adaptive RAG performance. Beyond retrieval-augmented settings, ReAct \citep{yao2022react} enables LLMs to interleave reasoning with action steps, generating queries to external tools when additional information is needed, and Reflexion \citep{shinn2023reflexion} allows LLMs to iteratively self-reflect on their own outputs to refine answers across multiple turns. In this work, we primarily evaluate our framework under the adaptive RAG setting and additionally experiment with the ReAct paradigm to demonstrate its generalizability.
\subsection{CP for LLM}
CP has become an important tool for guaranteeing the factuality of predictions from LLMs in high-stakes scenarios~\citep{zhou2025conformal, campos2024conformal}, and has been applied to various LLM tasks~\citep{sheng2025analyzinguncertaintyllmasajudgeinterval, kumar2023conformal, ye2024benchmarking}. In early applications, \citep{quach2023conformal} repeatedly sample outputs until the prediction set is confident enough to contain the ground truth, which is computationally expensive~\citep{su-etal-2024-api}. Several subsequent methods~\citep{su-etal-2024-api, li-etal-2024-traq} instead draw a fixed number of samples and use frequency to measure uncertainty, improving efficiency. Orthogonally, \citep{schuster2022confident} allow the transformer to exit early at the cost of a bounded performance drop.
However, the nearly infinite output space of LLMs makes achieving the target coverage intractable. A common strategy is to replace uncertain answers with more general ones. \citep{zhang2024conformal} propose a conformal structure prediction framework for tasks whose labels lie in a directed acyclic graph. \citep{mohri2024language, rubin2025conformal}, which can be viewed as a special variant of this approach, remove uncertain statements from long-form generations to maintain factuality or coherence. Another variant by \citep{li-etal-2024-traq} assigns a ``Can't Answer'' label to questions that the LLM cannot reliably answer within the fixed sample set. Our method extends the ``Can't Answer'' mechanism to multi-turn LLM interactions, allowing the model to early stop when sufficient confidence is reached while preserving the coverage guarantee in the no-early-stop setting.


\section{Preliminary}
\textbf{Foundations of Split CP.}
CP is a distribution-free framework for constructing prediction sets with finite-sample coverage guarantees \citep{shafer2008tutorial, angelopoulos2021gentle}. It operates as a post-hoc wrapper around any pre-trained model, without requiring any additional pretraining. More specifically, Split CP separates model training from threshold calibration. Given a dataset $\{(x_i, y_i)\}_{i=1}^{n}$, it is partitioned into a training set $\mathcal{D}_{\mathrm{train}} = \{(x_i, y_i)\}_{i=1}^{n_{\mathrm{train}}}$, a calibration set $\mathcal{D}_{\mathrm{cal}} = \{(x_i, y_i)\}_{i=1}^{n_{\mathrm{cal}}}$, and a test set $\mathcal{D}_{\mathrm{test}} = \{(x_i, y_i)\}_{i=1}^{n_{\mathrm{test}}}$, where $x_i$ is the input question, $y_i$ is its corresponding answer, $n_*$ denotes the size of each respective subset, and $n = n_{\mathrm{train}} + n_{\mathrm{cal}} + n_{\mathrm{test}}$.
Let $S(f, (x, y))$ denote a nonconformity score that measures how unusual a label $y$ is for input $x$ under a model $f$ trained on $\mathcal{D}_{\mathrm{train}}$. In a classification setting, for instance, a common choice is
\begin{equation}
\label{eq:score_example}
S(f, (x, y)) = 1 - f_y(x),
\end{equation}
where $f_y(x)$ is the predicted probability that $f$ assigns label $y$ to input $x$. Using the calibration set $\{(x_i, y_i)\}_{i=1}^{n_{\mathrm{cal}}}$, one evaluates the nonconformity scores $\{S(f,(x_i, y_i))\}_{i=1}^{n_{\mathrm{cal}}}$.
A threshold is then obtained by computing a quantile over these calibration scores at a user-specified error rate level $\alpha \in (0,1)$:
\begin{equation}
\label{eq:conformal_quantile}
\hat{q}_{\alpha} = \mathrm{Quantile}\!\left(\{s_i\}_{i=1}^{n_{\mathrm{cal}}};\; \frac{\lceil (n_{\mathrm{cal}} + 1)(1 - \alpha) \rceil}{n_{\mathrm{cal}}}\right).
\end{equation}
For a new test input $x_{n+1}$ and each candidate label $y \in \mathcal{Y}$, the nonconformity score is computed as $s_{n+1}(y) = S(f, (x_{n+1}, y))$. The prediction set is then defined as
\begin{equation}
\label{eq:prediction_set}
\mathcal{C}_{\alpha}(x_{n+1}) = \bigl\{y : s_{n+1}(y) < \hat{q}_{\alpha}\bigr\},
\end{equation}
which satisfies the marginal coverage guarantee
$
\label{eq:coverage_guarantee}
\mathbb{P}\bigl(y_{n+1} \in \mathcal{C}_{\alpha}(x_{n+1})\bigr) \geq 1 - \alpha.$
CP is not only expected to achieve the coverage guarantee but also keep its prediction sets as concise as possible so that the outputs are useful.
\paragraph{CP for LLMs.}
\label{cpllm}
Applying CP to LLMs is challenging because outputs are open-ended text rather than fixed label sets. Recent work \citep{quach2023conformal, su-etal-2024-api} samples multiple responses from the LLM, clusters semantically equivalent answers, and uses the cluster frequency as the basis for nonconformity scores. Given $M$ sampled answers clustered into groups $\{C_1, \ldots, C_K\}$, the nonconformity score for a cluster $C_j$ is defined as:$
S(C_j) = 1 - f(C_j),$
where $f(C_j) = |C_j| / M$ is the fraction of samples assigned to cluster $C_j$. A prediction set is then constructed by including all clusters whose frequency exceeds a conformally calibrated threshold, providing finite-sample coverage guarantees without requiring access to internal model logits. In CP for LLMs, a concentration problem arises because frequency-based nonconformity scores are discrete \citep{su-etal-2024-api}. To address this, \citep{su-etal-2024-api} propose combining frequency with negative entropy (NE) and semantic similarity as tie-breaking mechanisms. 
NE is calculated as:
\begin{equation}
\mathrm{NE}_t(x) = -\frac{1}{\log M} \sum_{j=1}^{K_t} f(C_j) \log f(C_j),
\end{equation}
where $\mathrm{NE}_t(x) \in [0, 1]$, with values near $0$ indicating that samples concentrate on a single answer (high confidence) and values near $1$ indicating a uniform spread across clusters (high uncertainty). The NE penalized frequency used in this work is formulated as:
\begin{equation}
\label{nepenal}
    f(C_i) = f(C_i)-\eta\cdot NE,
\end{equation}
where $\eta$ is a hyperparameter.

\section{Method}
We propose MiCP, a conformal framework for multi-turn LLMs that consists of three components: turn-level confidence estimation, adaptive early stopping, and final prediction set construction. At each turn, MiCP first evaluates the reliability of the current prediction set using a calibrated confidence criterion. It then decides whether to stop or continue by comparing the turn-specific score against conformal thresholds with different error budgets across iterations. If additional information or reasoning is needed, the model performs another retrieval, action, or reasoning step, as in adaptive RAG or ReAct. Once the model stops, MiCP constructs a final prediction set over the sampled answers, with an optional ``Can't Answer'' label when no reliable prediction can be made within the allotted budget.
\begin{figure}
    \centering
    \includegraphics[width=1\linewidth]{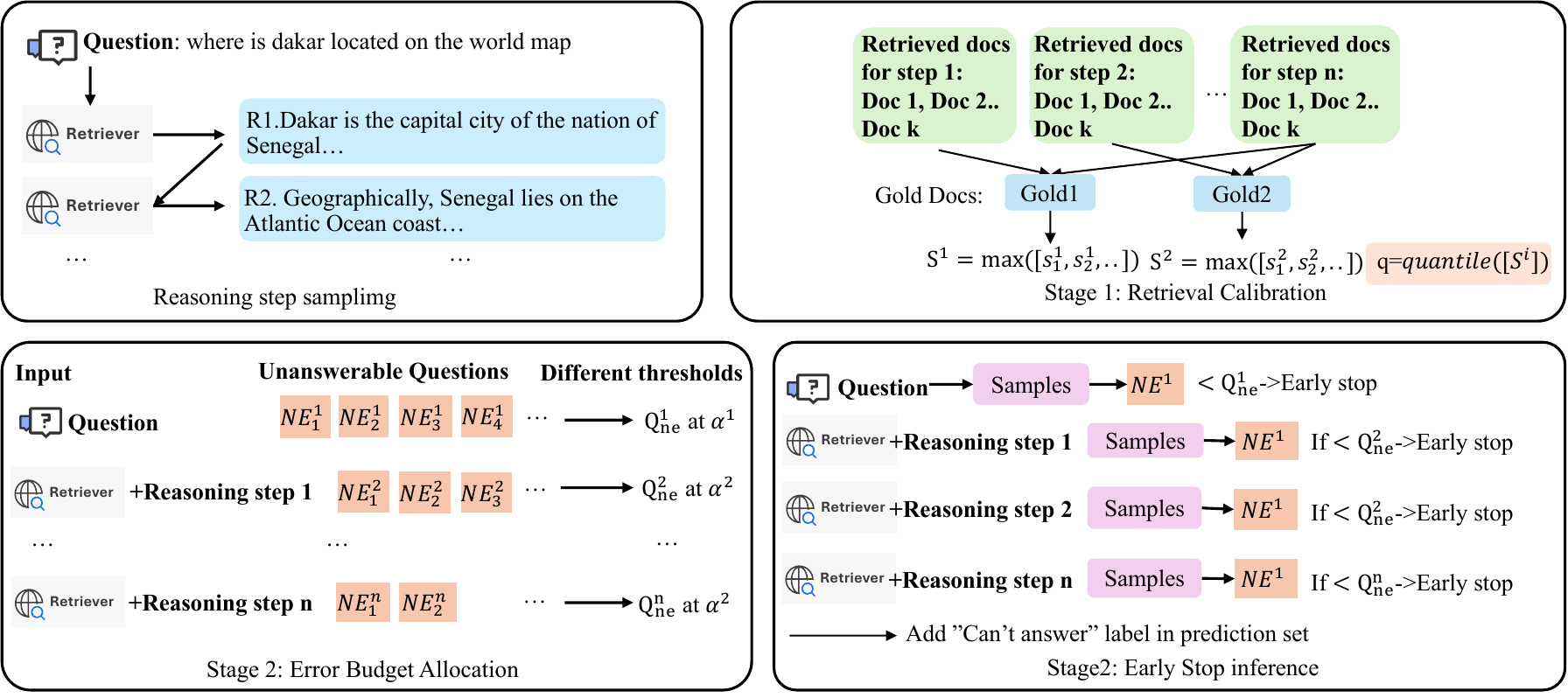}
    \caption{MiCP pipeline.}
    \label{fig:pipeline}
\end{figure}

\subsection{Problem Setup and Notation}

We consider a single reasoning trajectory per example. For question $x_i$, the model iteratively generates a sequence of reasoning steps $\mathbf{r}^{i} = (r_0^i, r_1^i, \ldots, r_{T-1}^i)$, where $r_0^i$ is the original question and each subsequent state is conditioned on prior reasoning and any retrieved passages or external actions. For example, in adaptive RAG, $r_t^i$ corresponds to a retrieval query, and in ReAct, $r_t^i$ may include reasoning and tool use.

MiCP calibration proceeds in two stages: turn-level retrieval filtering, followed by prediction set construction with adaptive early stopping.

\paragraph{Stage 1: Retrieval Filtering.}
Let $\{(x_i, \mathcal{P}_i^*)\}_{i=1}^{n_{\mathrm{cal}}}$ be the calibration set, where
\begin{equation}
    \mathcal{P}_i^* = \{p \in \mathcal{G}_i \mid \exists\, t,\; p \in \mathrm{Top}\text{-}K(r_t^i)\}
\end{equation}
is the subset of gold passages $\mathcal{G}_i$ retrievable from at least one reasoning step. We calibrate a threshold $\hat{q}^{\mathrm{ret}}$ and retain only passages with relevance score $s_t(p) \geq \hat{q}^{\mathrm{ret}}$, where $s_t(p)$ denotes the relevance score between passage $p$ and the query at turn $t$, ensuring approximately $1-\alpha_{\mathrm{ret}}$ of retrievable gold passages are kept while filtering low-relevance evidence.

\paragraph{Stage 2: Prediction Set and Early Stopping.}
Let $\{(x_i, y_i, \{\mathcal{A}_t^i\}_{t=0}^{T})\}_{i=1}^{n_{\mathrm{cal}}}$ be the calibration set, where $y_i$ is the gold answer and $\mathcal{A}_t^i$ is the set of $M$ sampled answers at turn $t$ using filtered context from Stage~1. We jointly learn: (1) turn-specific stopping thresholds $\{\hat{q}_t\}_{t=0}^{T}$ that determine whether the model should stop after turn $t$, and (2) a final prediction set
\begin{equation}
    \mathcal{C}(x_i) \subseteq \bigcup_{t=0}^{T}\mathcal{A}_t^i \cup \{\texttt{Can't Answer}\}.
\end{equation}
At each turn $t$, MiCP computes a calibrated confidence score from the sampled answers and stops whenever the score exceeds $\hat{q}_t$. If the model stops at turn $t < T$, the prediction set is constructed from $\bigcup_{\tau=0}^{t}\mathcal{A}_\tau^i$; after the final turn $T$, it may additionally include ``Can't Answer''. The prediction set is calibrated to satisfy
\begin{equation}
\mathbb{P}\!\left(
y_i \in \mathcal{C}(x_i) \;\text{if}\; y_i \in \bigcup_{t=0}^{T}\mathcal{A}_t^i,\;
\texttt{Can't Answer} \in \mathcal{C}(x_i) \;\text{if}\; y_i \notin \bigcup_{t=0}^{T}\mathcal{A}_t^i
\right) \geq 1-\alpha.
\end{equation}
That is, MiCP guarantees the prediction set either contains the correct answer when it appears among the sampled answers, or abstains via ``Can't Answer'' when no sampled answer is correct.

\subsection{Retrieval Threshold Calibration}
\label{sec:method:phase1}

A key challenge in multi-turn pipelines is that each retrieval turn returns a fixed set of top-$K$ passages, many of which may be irrelevant. As retrieval proceeds, these irrelevant passages accumulate and pollute the LLM's context, degrading both answer quality and stopping decisions. Prior work has applied CP to filter irrelevant passages in single-round retrieval~\citep{chakraborty2025principled}; we extend this to the multi-turn setting, applying retrieval filtering at every turn while preserving coverage over the entire reasoning process.

Because a gold passage $p$ may be retrieved at multiple turns with different relevance scores, we define its optimistic relevance score as the maximum across all turns:
\begin{equation}
s^*(p, x_i) = \max_{t:\, p \in \mathrm{Top}\text{-}K(r_t^i)} s_t(p).
\end{equation}
We aggregate these scores across the calibration set:
$
\mathcal{S}_{\mathrm{ret}} =
\{s^*(p, x_i) \mid p \in \mathcal{P}_i^*,\; i=1,\ldots,n_{\mathrm{cal}}\},
$
and define the conformal retrieval threshold as
\begin{equation}
\hat{q}_{\mathrm{ret}}
=
\mathrm{Quantile}
\left(
\mathcal{S}_{\mathrm{ret}}
\;\middle|\;
\frac{(n-1)\alpha_{\mathrm{ret}}}{n}
\right).
\label{eq:q_hat_ret}
\end{equation}

At test time, for turn $t$, MiCP retains only passages whose relevance scores exceed the threshold:
\begin{equation}
\mathcal{C}^{\mathrm{ret}}_t(x)
=
\left\{
p \in \mathrm{Top}\text{-}K(r_t)
\;\middle|\;
s_t(p) > \hat{q}_{\mathrm{ret}}
\right\}.
\end{equation}

This guarantees that at least a $1-\alpha_{\mathrm{ret}}$ fraction of retrievable gold passages are retained, while adaptively removing low-relevance passages that would otherwise hinder subsequent reasoning.

\subsection{Sequential NE Calibration with Error Budget Allocation}
\label{sec:method:phase2}

The multi-turn retrieval loop introduces a sequential decision structure: at each turn $t$, the system must decide whether the question can be answered with sufficient confidence or requires further retrieval. We use the normalized entropy (NE) of the LLM's sampled answer distribution as the uncertainty signal.

A naive approach uses one global error rate for all turns, but this ignores differences in question difficulty. Some questions can be answered after one retrieval step, while others require multiple hops. Using the same budget everywhere wastes it on easy, early cases and leaves less for harder, later ones.

To address this, we decompose the total error budget $\alpha$ across turns, allowing each turn to operate with its own calibrated NE threshold. We allocate a per-turn budget $\alpha_t$ for $t = 0, \ldots, T$, subject to:
\begin{equation}
\label{eq:error_decomposition}
\sum_{t=0}^{T} (1 - c^t_{\mathrm{ans}}) \cdot \alpha_t \;\leq\; (1 - c_{\mathrm{ans}}^{\mathrm{final}}) \cdot \alpha,
\end{equation}
where $c^t_{\mathrm{ans}}$ is the fraction of questions not early-stopped that are answerable at turn $t$, and $c_{\mathrm{ans}}^{\mathrm{final}}$ is the fraction of all calibration questions for which the gold answer appears in any turn's samples. The term $(1 - c^t_{\mathrm{ans}}) \cdot \alpha_t$ accounts for unanswerable questions early-stopped at turn $t$, whose prediction sets cannot include the gold answer regardless of the threshold.

Let $\mathcal{U}_t$ be the subset of questions still active at turn $t$ for which the gold answer has not appeared in any sampled answers up to and including $t$. The per-turn NE threshold is:
\begin{equation}
\label{NE}
\hat{q}_t = \mathrm{Quantile}\!\Bigl(\{\mathrm{NE}_t(x_i) \mid x_i \in \mathcal{U}_t\};\; \frac{(n_{\mathrm{cal}}-1)\alpha_t}{n_{\mathrm{cal}}}\Bigr).
\end{equation}

\subsection{Composite Evaluation Metric}
\label{sec:method:metric}

We introduce a composite metric to evaluate adaptive stopping in multi-turn LLMs. Some basic metrics provide misleading optimization incentives. The average number of turns encourages the model to stop as early as possible. Answer rate ignores that the proportion of answerable and unanswerable questions may vary across turns.

To balance these objectives, we propose:
\begin{equation}
\mathcal{L} = \gamma \cdot \bar{T} \;-\;  \frac{\sum_{t=0}^T n_{\mathrm{corr}}^{t} \,/\, c_{\mathrm{ans}}^{t}}{\sum_{t=0}^T n_{\mathrm{wrong}}^{t} \,/\, (1 - c_{\mathrm{ans}}^{t})}
\label{eq:composite_objective}
\end{equation}
where $\bar{T}$ is the average number of turns used, $n_{\mathrm{corr}}^t$ and $n_{\mathrm{wrong}}^t$ are the numbers of correctly and incorrectly answered examples that stop at turn $t$, and $\gamma$ controls the efficiency--quality trade-off. The second term measures how well the stopping policy distinguishes answerable from unanswerable questions at each turn, rewarding correct answers at turns where answering is feasible while penalizing incorrect answers at turns where most questions remain unresolved.
Although introduced in the context of MiCP, this metric is general and can evaluate stopping policies in adaptive RAG, ReAct, and other multi-turn LLM systems.

\paragraph{Grid Search Optimization}
\label{sec:method:grid}
We perform a grid search over $\alpha_0, \ldots, \alpha_{T-1}$, deriving $\alpha_{T}$ from Eq.~\eqref{eq:error_decomposition}, and select the allocation minimizing $\mathcal{L}$ on a held-out optimization set $\mathcal{D}_{\mathrm{opt}}$.

\subsection{Final Answer Prediction Set Confidence Threshold Calibration}
\label{sec:method:freq_cal}
For each calibration sample whose gold answer appears in at least one sampled answer before stopping, let $t_i^*$ denote the stopping turn (or $T$ if no early stopping occurs). At each turn $t \leq t_i^*$, we cluster the sampled answers and compute a confidence score for each cluster based on its frequency, optionally adjusted by the NE. A high-confidence cluster corresponds to many consistent samples and indicates that the model may already have sufficient evidence.

The confidence of the gold-answer cluster is:
\begin{equation}
f(C_{\mathrm{gold}}^i)
=
\max_{0 \leq t \leq t_i^*}
\max_{C_j^t:\, y_i \in C_j^t}
f(C_j^t),
\label{eq:freq_score}
\end{equation}
where the outer maximum ranges over turns up to the stopping point, and the inner maximum ranges over all answer clusters $C_j^t$ containing the gold answer $y_i$.

The conformal threshold is:
\begin{equation}
\hat{q}_{\mathrm{freq}}
=
\mathrm{Quantile}
\left(
\{f(C_{\mathrm{gold}}^i)\}_{i \in \mathcal{D}_{\mathrm{cal}}^{\mathrm{ans}}},\;
\frac{(n_{\mathrm{cal}}-1)\alpha}{n_{\mathrm{cal}}}
\right),
\label{eq:q_hat_freq}
\end{equation}
where $\mathcal{D}_{\mathrm{cal}}^{\mathrm{ans}}$ denotes calibration samples whose gold answer appears in the sampled answers before stopping.

\subsection{Prediction Set Construction}
\label{sec:method:predset}
At test time, for a test sample $x$, MiCP stops at the first turn whose highest-confidence cluster exceeds the stopping threshold:
$
t^* = \min \{ t : f(C_{\max}^t) \geq \hat{q}_t \}.
$
If no threshold is satisfied, the model continues through all $T$ turns.

For every turn $t \leq t^*$, sampled answers are clustered using the procedure in Section~\ref{cpllm}. The final prediction set is:
\begin{equation}
\mathcal{C}(x)=
\begin{cases}
\displaystyle
\bigcup_{t=0}^{t^*}
\left\{
C_j^t \mid f(C_j^t) \geq \hat{q}_{\mathrm{freq}}
\right\},
& \text{if } t^* < T, \\[1.2em]
\displaystyle
\bigcup_{t=0}^{T}
\left\{
C_j^t \mid f(C_j^t) \geq \hat{q}_{\mathrm{freq}}
\right\}
\cup
\{\texttt{Can't Answer}\},
& \text{otherwise}.
\end{cases}
\label{psc}
\end{equation}

If the model reaches sufficient confidence before the final turn, MiCP returns only the high-confidence answer clusters accumulated up to that point. Otherwise, after exhausting all turns, it additionally includes ``Can't Answer'' to preserve the coverage guarantee.
\section{Experiment} 

We investigate the following research questions:
\squishlist
\item \textbf{RQ1:} Can our proposed method achieve the desired coverage guarantee for both retrieval and prediction sets in multi-turn reasoning settings?
\item \textbf{RQ2:} How does the efficiency of our method, in terms of both prediction set size and number of inference steps, compare with existing baselines?
\item \textbf{RQ3:} To what extent does grid search improve the optimization of our proposed objective?
\squishend

\subsection{Experimental Setup}
\label{sec:experiments:setup}
We evaluate MiCP on five question answering benchmarks spanning both single-hop and multi-hop reasoning: Natural Questions (NQ)~\citep{kwiatkowski2019natural}, TriviaQA~\citep{joshi2017triviaqa}, HotpotQA~\citep{yang2018hotpotqa}, MuSiQue~\citep{trivedi2022musique}, and 2WikiMultiHopQA (2WikiMHQA)~\citep{ho2020constructing}. To demonstrate the generality of MiCP beyond adaptive RAG, we additionally evaluate on the multi-turn reasoning framework ReAct~\citep{yao2022react}, where the model alternates between reasoning and information-seeking actions. For each dataset, we randomly sample 900 examples and split them into an optimization set ($|\mathcal{D}_{\mathrm{opt}}| = 300$), a calibration set ($|\mathcal{D}_{\mathrm{cal}}| = 300$), and a test set ($|\mathcal{D}_{\mathrm{test}}| = 300$). We use three popular LLMs of comparable scale: Gemma-2-9B-IT~\citep{team2024gemma}, Qwen3.5-9B~\citep{qwen3.5}, and GPT-4o-mini \citep{achiam2023gpt}, with answer clustering performed by the all-MiniLM-L6-v2 sentence transformer using agglomerative clustering at a cosine similarity threshold of $0.9$.
Other settings can be found in Appendix \ref{setting}.

\begin{figure}[t]
\centering
\begin{subfigure}[b]{0.32\textwidth}
    \centering
    \includegraphics[width=\textwidth]{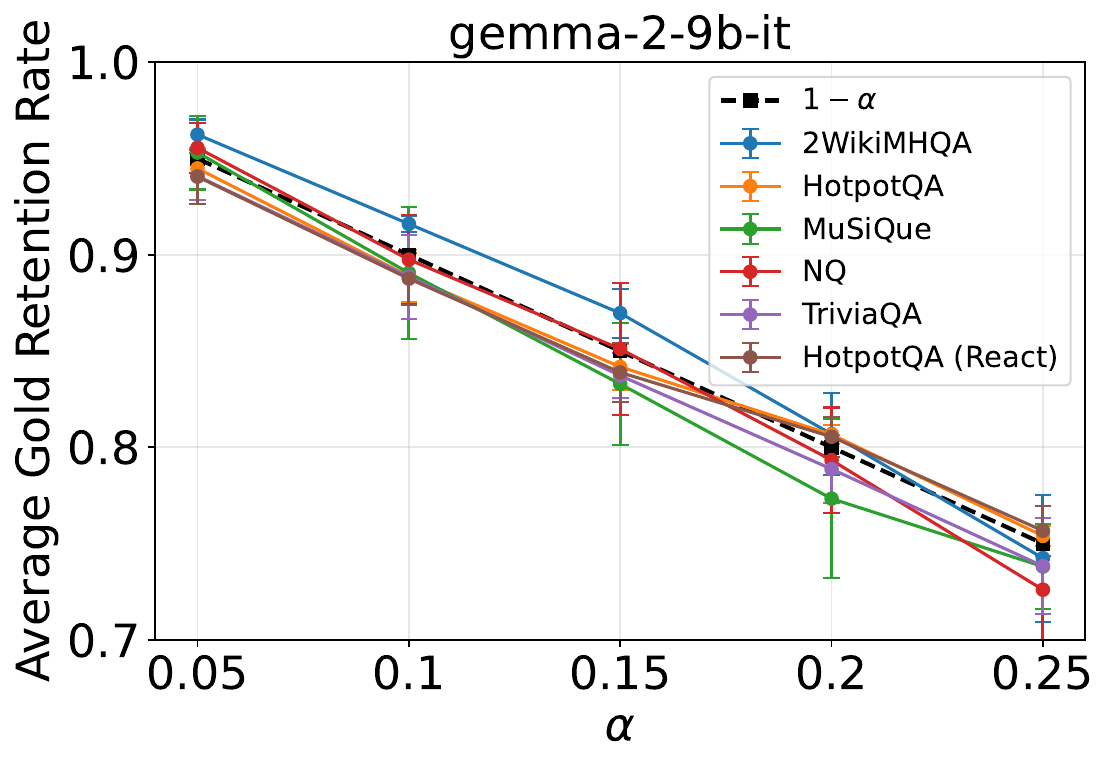}
    \caption{Gemma-2-9B-IT}
    \label{fig:coverage_gemma}
\end{subfigure}
\hfill
\begin{subfigure}[b]{0.32\textwidth}
    \centering
    \includegraphics[width=\textwidth]{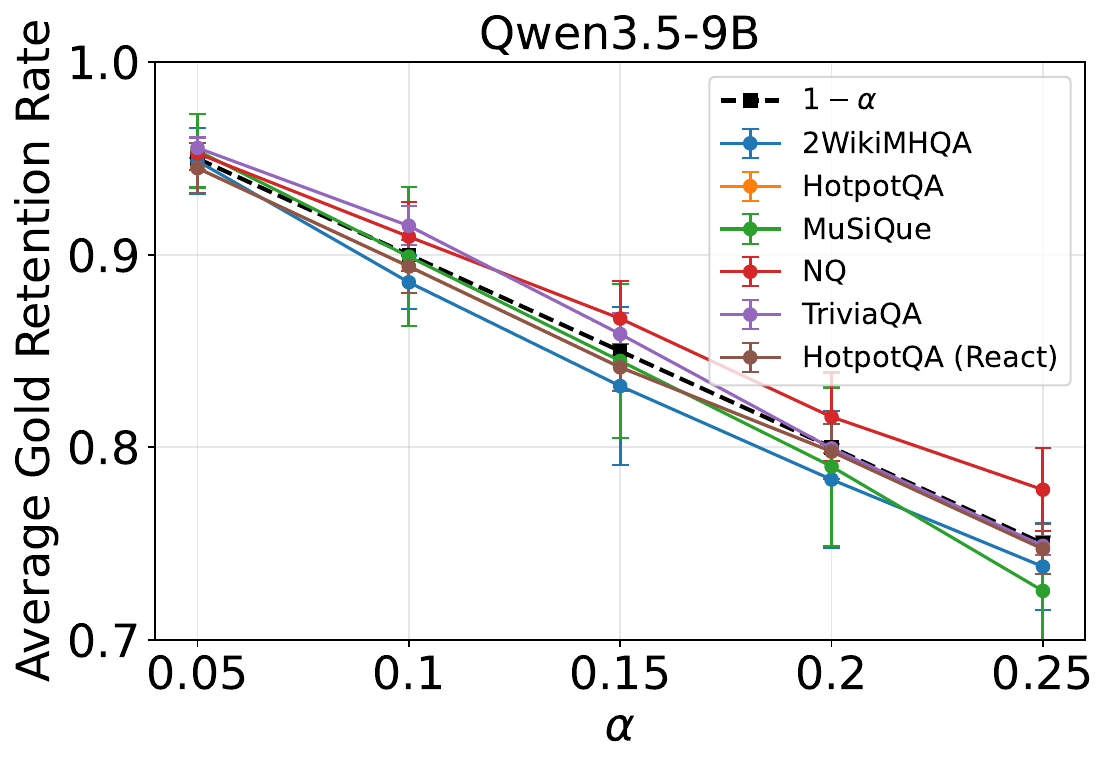}
    \caption{Qwen3.5-9B}
    \label{fig:coverage_qwen}
\end{subfigure}
\hfill
\begin{subfigure}[b]{0.32\textwidth}
    \centering
    \includegraphics[width=\textwidth]{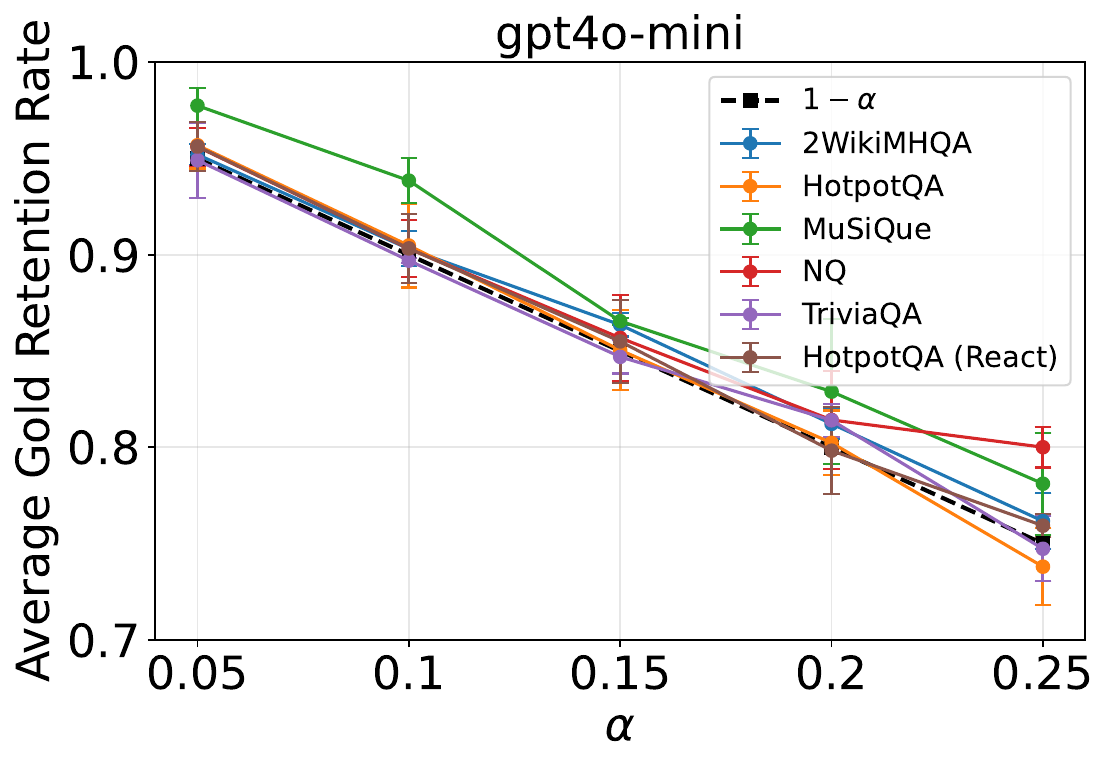}
    \caption{GPT-4o-mini}
    \label{fig:coverage_gpt}
\end{subfigure}
\caption{Empirical gold retention rate vs.\ the target $1-\alpha$ across five datasets on adaptive RAG and ReAct for retrieval calibration. All models maintain coverage at or above the $1-\alpha$ guarantee for all tested error rates.}
\label{fig:coverage}
\end{figure}
\begin{figure}[t]
\centering
\begin{subfigure}[b]{0.32\textwidth}
    \centering
    \includegraphics[width=\textwidth]{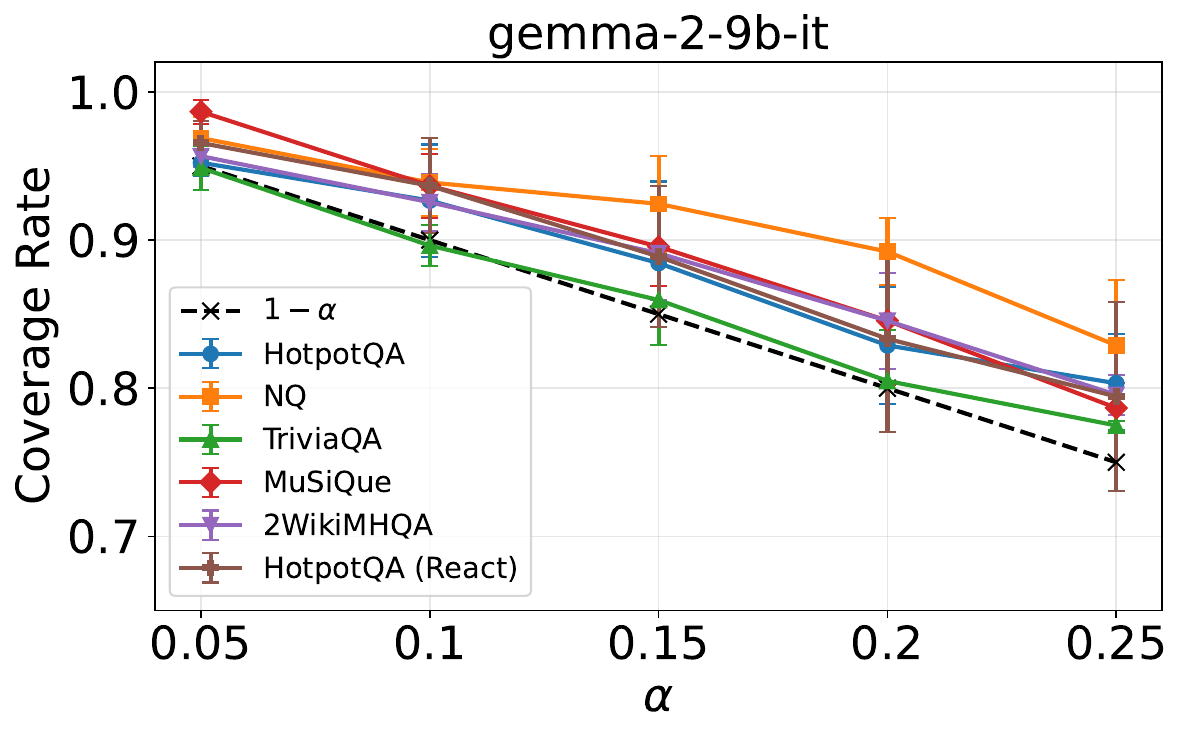}
    \caption{Gemma-2-9B-IT}
    \label{fig:coverage_gemma_pe}
\end{subfigure}
\hfill
\begin{subfigure}[b]{0.32\textwidth}
    \centering
    \includegraphics[width=\textwidth]{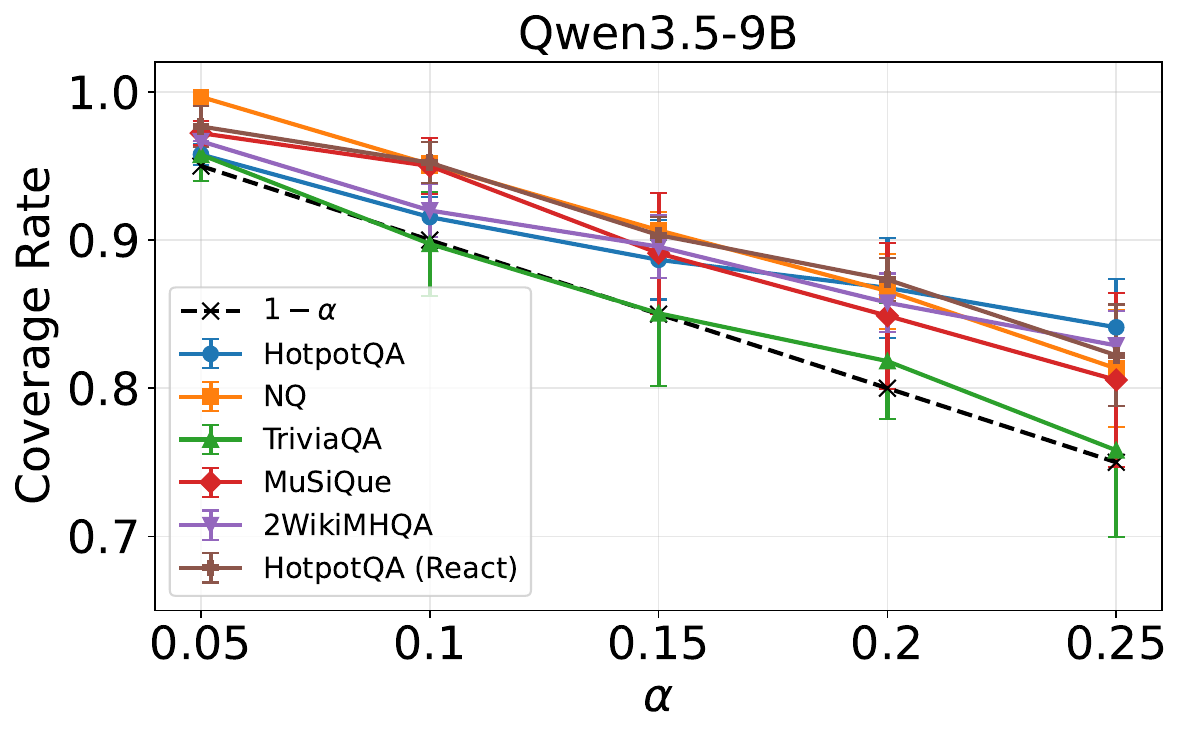}
    \caption{Qwen3.5-9B}
    \label{fig:coverage_qwen_pe}
\end{subfigure}
\hfill
\begin{subfigure}[b]{0.32\textwidth}
    \centering
    \includegraphics[width=\textwidth]{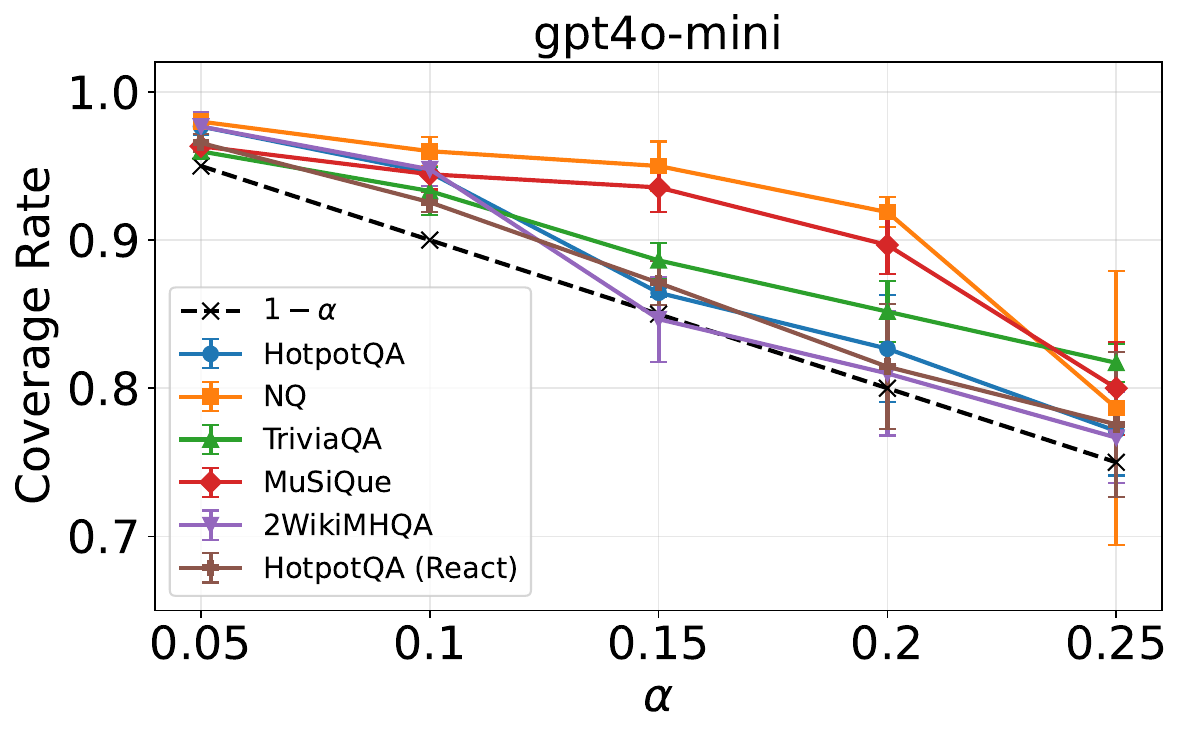}
    \caption{GPT-4o-mini}
    \label{fig:coverage_gpt_pe}
\end{subfigure}
\caption{Empirical coverage rate vs.\ the target $1-\alpha$ across five datasets on adaptive RAG and ReAct for the final prediction set. All models maintain coverage at or above the $1-\alpha$ guarantee for all tested error rates.}
\label{fig:coverage_pe}
\end{figure}

\subsection{RQ1}
\label{sec:exp:rq1}

Figure \ref{fig:coverage} and Figure \ref{fig:coverage_pe} present the empirical coverage results for retrieval filtering and prediction set construction, respectively. For retrieval, the average gold retention rate closely tracks the $1-\alpha$ target across all five datasets and all models, confirming that the conformal retrieval threshold $\hat{q}_{\mathrm{ret}}$ successfully preserves retrievable gold passages while filtering irrelevant ones. For the prediction set, the coverage rate similarly stays at or above $1-\alpha$ for nearly all $\alpha$ values and datasets. It shows a little overcoverage problem, which might be caused by the concentration problem coming from the frequency \citep{su-etal-2024-api}. These results confirm that MiCP maintains the desired $1-\alpha$ coverage guarantee across both the retrieval and prediction stages, even with early stopping enabled, which comes from our multi-hop retrieval calibration and error budget allocation.

\subsection{RQ2}
\label{sec:exp:rq2}

\begin{table*}[t]
\centering
\caption{Average number of retrieval turns (Avg \#Turns), prediction set size, and composite objective across five datasets, averaged over all $\alpha$ values. \textbf{Bold} indicates the best result per row. Lower is better for all metrics. w/o ES means no early stop.}
\label{tab:main_results}
\resizebox{\textwidth}{!}{
\begin{tabular}{ll|cccc|cccc|cccc}
\toprule
& & \multicolumn{4}{c|}{\textbf{Qwen3.5-9B}} & \multicolumn{4}{c|}{\textbf{Gemma-2-9B-IT}} & \multicolumn{4}{c}{\textbf{GPT-4o-mini}} \\
& \textbf{Dataset} & Ours & w/o ES & Avg Iter & Answer Rate & Ours & w/o ES & Avg Iter & Answer Rate & Ours & w/o ES & Avg Iter & Answer Rate \\
\midrule
\multirow{6}{*}{\rotatebox{90}{\textbf{Avg \#Turns}}}
& HotpotQA         & 2.562 & 3.000 & \textbf{2.331} & 2.433 & 2.782 & 3.000 & \textbf{2.481} & 2.854 & 2.497 & 3.000 & \textbf{2.173} & 2.607 \\
& HotpotQA (React) & 2.522 & 3.000 & \textbf{2.313} & 2.557 & 2.752 & 3.000 & \textbf{2.472} & 2.757 & 2.587 & 3.000 & \textbf{2.167} & 2.877 \\
& NQ               & 2.621 & 3.000 & \textbf{2.459} & 2.770 & 2.840 & 3.000 & \textbf{2.597} & 2.856 & 3.000 & 3.000 & \textbf{2.849} & 2.924 \\
& TriviaQA         & 2.528 & 3.000 & \textbf{2.356} & 2.587 & 2.898 & 3.000 & \textbf{2.687} & 2.806 & \textbf{2.494} & 3.000 & \textbf{2.494} & 2.641 \\
& MuSiQue          & 2.705 & 3.000 & \textbf{2.519} & 2.861 & 2.795 & 3.000 & \textbf{2.488} & 2.681 & 2.870 & 3.000 & \textbf{2.278} & 2.936 \\
& 2WikiMHQA        & 1.778 & 3.000 & \textbf{1.472} & 1.477 & 2.553 & 3.000 & \textbf{2.275} & 2.790 & \textbf{2.293} & 3.000 & \textbf{2.293} & 2.767 \\
\midrule
\multirow{6}{*}{\rotatebox{90}{\textbf{Avg Set Size}}}
& HotpotQA         & 10.75 & 11.45 & \textbf{10.56} & 10.57 & 7.99 & 8.10 & \textbf{7.68} & 7.88 & 3.503 & 2.738 & \textbf{2.469} & 4.382 \\
& HotpotQA (ReAct) & 11.94 & 12.85 & \textbf{11.50} & 11.79 & 6.82 & 6.85 & \textbf{6.59} & 6.69 & 4.426 & 3.781 & \textbf{2.576} & 4.776 \\
& NQ               & 19.63 & 21.16 & \textbf{19.02} & 19.82 & 9.38 & 9.66 & \textbf{8.77} & 9.27 & 5.753 & 5.753 & \textbf{5.683} & 5.740 \\
& TriviaQA         & 5.64  & 5.77  & \textbf{5.47}  & 5.50  & 3.26 & 3.22 & \textbf{3.13} & 3.14 & 1.967 & 1.982 & 2.305 & \textbf{1.942} \\
& MuSiQue          & 43.25 & 42.02 & \textbf{40.09} & 42.44 & 24.87 & 25.46 & \textbf{22.50} & 23.34 & 19.568 & 25.121 & \textbf{10.207} & 25.066 \\
& 2WikiMHQA        & 7.32  & 8.62  & \textbf{6.54}  & 6.57  & 9.09 & 8.89 & \textbf{8.73} & 8.70 & \textbf{4.033} & 4.164 & \textbf{4.033} & 4.078 \\
\midrule
\multirow{6}{*}{\rotatebox{90}{\textbf{Ours Obj.}}}
& HotpotQA         & \textbf{-5.95} & -1.96 & -5.17 & -4.84 & -1.95 & -1.34 & \textbf{-2.47} & -1.78 & \textbf{-2.097} & -1.843 & -2.083 & -2.045 \\
& HotpotQA (ReAct) & -6.95 & -2.68 & \textbf{-7.28} & -4.42 & -2.58 & -1.74 & \textbf{-4.07} & -2.16 & \textbf{-2.932} & -1.518 & -1.993 & -1.902 \\
& NQ               & \textbf{-8.67} & -3.23 & -5.20 & -4.30 & -3.20 & -1.93 & \textbf{-3.27} & -2.50 & \textbf{-0.368} & \textbf{-0.368} & -0.364 & -0.361 \\
& TriviaQA         & \textbf{-3.61} & -2.13 & -3.31 & -2.84 & -2.06 & -0.97 & \textbf{-2.26} & -1.37 & -2.205 & \textbf{-3.467} & -1.617 & -2.356 \\
& MuSiQue          & \textbf{-5.17} & -2.08 & -4.37 & -2.99 & \textbf{-2.43} & -2.41 & -2.29 & -2.37 & \textbf{-2.785} & -1.953 & -1.787 & -2.701 \\
& 2WikiMHQA        & \textbf{-11.79} & -3.01 & -8.16 & -8.19 & -2.51 & -1.58 & \textbf{-3.21} & -1.73 & \textbf{-1.920} & -1.334 & \textbf{-1.920} & -1.449 \\
\bottomrule
\end{tabular}
}
\end{table*}

Table~\ref{tab:main_results} compares three variations of MiCP---Ours (optimizing the composite objective in Eq.~\eqref{eq:composite_objective}), Avg Iter (optimizing for average turns), and Answer Rate (optimizing for the ratio of questions without the ``Can't Answer'' label)---against the No Early Stop baseline where all questions run for $T$ turns. We highlight two findings.

First, MiCP successfully improves the inference efficiency (avg \#turns), with better and more recent models (e.g., qwen3.5-9B, gpt-4o-mini), our method is able to get better inference efficiency. Especially on 2wikiMHQA, it is easier to distinguish answerable and unanswerable questions as it is a fully template-generated multi-hop QA dataset. MiCP also consistently improves the prediction efficiency (prediction set size). It achieves this by filtering answers from unnecessary late turns, which may not include the correct answer and may even bring noisy, wrong answers. 

Second, we find that using average turns as the optimization objective consistently yields the fewest turns and smallest prediction set sizes. This is because optimizing for fewer turns encourages the system to allocate more error budget to early turns, causing the LLM to stop earlier and thereby avoiding the accumulation of noisy, low-confidence answers from unnecessary later turns.




\subsection{RQ3}
\label{sec:exp:rq3}

As shown in Table \ref{tab:main_results}, grid search over Qwen3.5-9B and gpt-4-mini successfully optimizes our proposed objective, yielding more rational early-stopping decisions. Moreover, all variants of MiCP consistently improve upon the baseline objective, demonstrating the necessity of early stopping in multi-turn question answering. On Gemma-2-9B-IT, the improvements are smaller but still consistent, confirming that the grid search generalizes across models with different uncertainty characteristics. Noticing that for the Gemma model, using avg-turn even achieves the best performance on the objective without optimizing it. An explanation can be that it is hard for Gemma-2-9B to distinguish the difference between answerable and unanswerable questions, which makes the optimization not stable.
\section{Conclusion}
We presented MiCP, the first conformal framework for multi-turn LLMs, which jointly calibrates retrieval filtering, adaptive early stopping, and final prediction set construction while maintaining an overall $1-\alpha$ coverage guarantee. MiCP applies broadly to multi-turn pipelines such as adaptive RAG and ReAct, where the model may retrieve, reason, or act over multiple turns before answering. By allocating the error budget across turns and optimizing turn-specific thresholds through grid search, MiCP preserves the same coverage guarantee as a strategy without early stopping while substantially improving efficiency. Experiments on five single-hop and multi-hop QA benchmarks with three popular LLMs show that MiCP achieves the target coverage at both the retrieval and answer stages, while significantly reducing the number of turns, retrieval cost, and maintaining competitive prediction set sizes.
\bibliography{colm2026_conference}

@inproceedings{DBLP:conf/acl/TrivediBKS23,
  author       = {Harsh Trivedi and
                  Niranjan Balasubramanian and
                  Tushar Khot and
                  Ashish Sabharwal},
  editor       = {Anna Rogers and
                  Jordan L. Boyd{-}Graber and
                  Naoaki Okazaki},
  title        = {Interleaving Retrieval with Chain-of-Thought Reasoning for Knowledge-Intensive
                  Multi-Step Questions},
  booktitle    = {Proceedings of the 61st Annual Meeting of the Association for Computational
                  Linguistics (Volume 1: Long Papers), {ACL} 2023, Toronto, Canada,
                  July 9-14, 2023},
  pages        = {10014--10037},
  publisher    = {Association for Computational Linguistics},
  year         = {2023},
  url          = {https://doi.org/10.18653/v1/2023.acl-long.557},
  doi          = {10.18653/V1/2023.ACL-LONG.557},
  bibsource    = {dblp computer science bibliography, https://dblp.org}
}

@inproceedings{DBLP:conf/naacl/JeongBCHP24,
  author       = {Soyeong Jeong and
                  Jinheon Baek and
                  Sukmin Cho and
                  Sung Ju Hwang and
                  Jong Park},
  editor       = {Kevin Duh and
                  Helena G{\'{o}}mez{-}Adorno and
                  Steven Bethard},
  title        = {Adaptive-RAG: Learning to Adapt Retrieval-Augmented Large Language
                  Models through Question Complexity},
  booktitle    = {Proceedings of the 2024 Conference of the North American Chapter of
                  the Association for Computational Linguistics: Human Language Technologies
                  (Volume 1: Long Papers), {NAACL} 2024, Mexico City, Mexico, June 16-21,
                  2024},
  pages        = {7036--7050},
  publisher    = {Association for Computational Linguistics},
  year         = {2024},
  url          = {https://doi.org/10.18653/v1/2024.naacl-long.389},
  doi          = {10.18653/V1/2024.NAACL-LONG.389},
  bibsource    = {dblp computer science bibliography, https://dblp.org}
}

@inproceedings{jiang-etal-2023-active,
    title = "Active Retrieval Augmented Generation",
    author = "Jiang, Zhengbao  and
      Xu, Frank  and
      Gao, Luyu  and
      Sun, Zhiqing  and
      Liu, Qian  and
      Dwivedi-Yu, Jane  and
      Yang, Yiming  and
      Callan, Jamie  and
      Neubig, Graham",
    editor = "Bouamor, Houda  and
      Pino, Juan  and
      Bali, Kalika",
    booktitle = "Proceedings of the 2023 Conference on Empirical Methods in Natural Language Processing",
    month = dec,
    year = "2023",
    address = "Singapore",
    publisher = "Association for Computational Linguistics",
    url = "https://aclanthology.org/2023.emnlp-main.495",
    doi = "10.18653/v1/2023.emnlp-main.495",
    pages = "7969--7992",
}

@misc{su2024dragindynamicretrievalaugmented,
      title={DRAGIN: Dynamic Retrieval Augmented Generation based on the Information Needs of Large Language Models}, 
      author={Weihang Su and Yichen Tang and Qingyao Ai and Zhijing Wu and Yiqun Liu},
      year={2024},
      eprint={2403.10081},
      archivePrefix={arXiv},
      primaryClass={cs.CL},
      url={https://arxiv.org/abs/2403.10081}, 
}

@article{DBLP:journals/corr/abs-2402-10612-rowen,
  author       = {Hanxing Ding and
                  Liang Pang and
                  Zihao Wei and
                  Huawei Shen and
                  Xueqi Cheng},
  title        = {Retrieve Only When It Needs: Adaptive Retrieval Augmentation for Hallucination
                  Mitigation in Large Language Models},
  journal      = {CoRR},
  volume       = {abs/2402.10612},
  year         = {2024},
  url          = {https://doi.org/10.48550/arXiv.2402.10612},
  doi          = {10.48550/ARXIV.2402.10612},
  eprinttype    = {arXiv},
  eprint       = {2402.10612},
  bibsource    = {dblp computer science bibliography, https://dblp.org}
}

@article{DBLP:journals/corr/abs-2406-19215,
  author       = {Zijun Yao and
                  Weijian Qi and
                  Liangming Pan and
                  Shulin Cao and
                  Linmei Hu and
                  Weichuan Liu and
                  Lei Hou and
                  Juanzi Li},
  title        = {SeaKR: Self-aware Knowledge Retrieval for Adaptive Retrieval Augmented
                  Generation},
  journal      = {CoRR},
  volume       = {abs/2406.19215},
  year         = {2024},
  url          = {https://doi.org/10.48550/arXiv.2406.19215},
  doi          = {10.48550/ARXIV.2406.19215},
  eprinttype    = {arXiv},
  eprint       = {2406.19215},
  bibsource    = {dblp computer science bibliography, https://dblp.org}
}

@article{min2025quco,
  title={QuCo-RAG: Quantifying Uncertainty from the Pre-training Corpus for Dynamic Retrieval-Augmented Generation},
  author={Min, Dehai and Zhang, Kailin and Wu, Tongtong and Cheng, Lu},
  journal={arXiv preprint arXiv:2512.19134},
  year={2025}
}

@inproceedings{moskvoretskii2025adaptive,
  title={Adaptive retrieval without self-knowledge? bringing uncertainty back home},
  author={Moskvoretskii, Viktor and Marina, Maria and Salnikov, Mikhail and Ivanov, Nikolay and Pletenev, Sergey and Galimzianova, Daria and Krayko, Nikita and Konovalov, Vasily and Nikishina, Irina and Panchenko, Alexander},
  booktitle={Proceedings of the 63rd Annual Meeting of the Association for Computational Linguistics (Volume 1: Long Papers)},
  pages={6355--6384},
  year={2025}
}

@inproceedings{su-etal-2024-api,
    title = "{API} Is Enough: Conformal Prediction for Large Language Models Without Logit-Access",
    author = "Su, Jiayuan  and
      Luo, Jing  and
      Wang, Hongwei  and
      Cheng, Lu",
    editor = "Al-Onaizan, Yaser  and
      Bansal, Mohit  and
      Chen, Yun-Nung",
    booktitle = "Findings of the Association for Computational Linguistics: EMNLP 2024",
    month = nov,
    year = "2024",
    address = "Miami, Florida, USA",
    publisher = "Association for Computational Linguistics",
    url = "https://aclanthology.org/2024.findings-emnlp.54/",
    doi = "10.18653/v1/2024.findings-emnlp.54",
    pages = "979--995"
}

@inproceedings{li-etal-2024-traq,
    title = "{TRAQ}: Trustworthy Retrieval Augmented Question Answering via Conformal Prediction",
    author = "Li, Shuo  and
      Park, Sangdon  and
      Lee, Insup  and
      Bastani, Osbert",
    editor = "Duh, Kevin  and
      Gomez, Helena  and
      Bethard, Steven",
    booktitle = "Proceedings of the 2024 Conference of the North American Chapter of the Association for Computational Linguistics: Human Language Technologies (Volume 1: Long Papers)",
    month = jun,
    year = "2024",
    address = "Mexico City, Mexico",
    publisher = "Association for Computational Linguistics",
    url = "https://aclanthology.org/2024.naacl-long.210/",
    doi = "10.18653/v1/2024.naacl-long.210",
    pages = "3799--3821"
}

@article{mohri2024language,
  title={Language models with conformal factuality guarantees},
  author={Mohri, Christopher and Hashimoto, Tatsunori},
  journal={arXiv preprint arXiv:2402.10978},
  year={2024}
}

@article{rubin2025conformal,
  title={Conformal Language Model Reasoning with Coherent Factuality},
  author={Rubin-Toles, Maxon and Gambhir, Maya and Ramji, Keshav and Roth, Aaron and Goel, Surbhi},
  journal={arXiv preprint arXiv:2505.17126},
  year={2025}
}

@misc{sheng2025analyzinguncertaintyllmasajudgeinterval,
      title={Analyzing Uncertainty of LLM-as-a-Judge: Interval Evaluations with Conformal Prediction}, 
      author={Huanxin Sheng and Xinyi Liu and Hangfeng He and Jieyu Zhao and Jian Kang},
      year={2025},
      eprint={2509.18658},
      archivePrefix={arXiv},
      primaryClass={cs.CL},
      url={https://arxiv.org/abs/2509.18658}, 
}

@article{kumar2023conformal,
  title={Conformal prediction with large language models for multi-choice question answering},
  author={Kumar, Bhawesh and Lu, Charlie and Gupta, Gauri and Palepu, Anil and Bellamy, David and Raskar, Ramesh and Beam, Andrew},
  journal={arXiv preprint arXiv:2305.18404},
  year={2023}
}

@inproceedings{ye2024benchmarking,
title={Benchmarking {LLM}s via Uncertainty Quantification},
author={Fanghua Ye and Mingming Yang and Jianhui Pang and Longyue Wang and Derek F. Wong and Emine Yilmaz and Shuming Shi and Zhaopeng Tu},
booktitle={The Thirty-eight Conference on NIPS Datasets and Benchmarks Track},
year={2024},
url={https://openreview.net/forum?id=L0oSfTroNE}
}

@article{team2024gemma,
  title={Gemma: Open models based on gemini research and technology},
  author={Team, Gemma and Mesnard, Thomas and Hardin, Cassidy and Dadashi, Robert and Bhupatiraju, Surya and Pathak, Shreya and Sifre, Laurent and Rivi{\`e}re, Morgane and Kale, Mihir Sanjay and Love, Juliette and others},
  journal={arXiv preprint arXiv:2403.08295},
  year={2024}
}

@article{shafer2008tutorial,
  title={A tutorial on conformal prediction.},
  author={Shafer, Glenn and Vovk, Vladimir},
  journal={Journal of Machine Learning Research},
  volume={9},
  number={3},
  year={2008}
}

@article{angelopoulos2021gentle,
  title={A gentle introduction to conformal prediction and distribution-free uncertainty quantification},
  author={Angelopoulos, Anastasios N and Bates, Stephen},
  journal={arXiv preprint arXiv:2107.07511},
  year={2021}
}

@article{campos2024conformal,
  title={Conformal prediction for natural language processing: A survey},
  author={Campos, Margarida and Farinhas, Ant{\'o}nio and Zerva, Chrysoula and Figueiredo, M{\'a}rio AT and Martins, Andr{\'e} FT},
  journal={Transactions of the Association for Computational Linguistics},
  volume={12},
  pages={1497--1516},
  year={2024},
  publisher={MIT Press 255 Main Street, 9th Floor, Cambridge, Massachusetts 02142, USA~…}
}

@inproceedings{yang2018hotpotqa,
  title={{HotpotQA}: A Dataset for Diverse, Explainable Multi-hop Question Answering},
  author={Yang, Zhilin and Qi, Peng and Zhang, Saizheng and Bengio, Yoshua and Cohen, William W. and Salakhutdinov, Ruslan and Manning, Christopher D.},
  booktitle={Conference on Empirical Methods in Natural Language Processing ({EMNLP})},
  year={2018}
}

@article{zhang2024conformal,
  title={Conformal structured prediction},
  author={Zhang, Botong and Li, Shuo and Bastani, Osbert},
  journal={arXiv preprint arXiv:2410.06296},
  year={2024}
}

@article{shinn2023reflexion,
  title={Reflexion: Language agents with verbal reinforcement learning},
  author={Shinn, Noah and Cassano, Federico and Gopinath, Ashwin and Narasimhan, Karthik and Yao, Shunyu},
  journal={Advances in neural information processing systems},
  volume={36},
  pages={8634--8652},
  year={2023}
}

@article{schuster2022confident,
  title={Confident adaptive language modeling},
  author={Schuster, Tal and Fisch, Adam and Gupta, Jai and Dehghani, Mostafa and Bahri, Dara and Tran, Vinh and Tay, Yi and Metzler, Donald},
  journal={Advances in Neural Information Processing Systems},
  volume={35},
  pages={17456--17472},
  year={2022}
}

@article{elasticsearch2018elasticsearch,
  title={Elasticsearch},
  author={Elasticsearch, BV},
  journal={software], version},
  volume={6},
  number={1},
  year={2018}
}

@inproceedings{yao2022react,
  title={React: Synergizing reasoning and acting in language models},
  author={Yao, Shunyu and Zhao, Jeffrey and Yu, Dian and Du, Nan and Shafran, Izhak and Narasimhan, Karthik R and Cao, Yuan},
  booktitle={The eleventh international conference on learning representations},
  year={2022}
}

@article{chakraborty2025principled,
  title={Principled Context Engineering for RAG: Statistical Guarantees via Conformal Prediction},
  author={Chakraborty, Debashish and Yang, Eugene and Khashabi, Daniel and Lawrie, Dawn and Duh, Kevin},
  journal={arXiv preprint arXiv:2511.17908},
  year={2025}
}

@article{kwiatkowski2019natural,
  title={Natural Questions: A Benchmark for Question Answering Research},
  author={Kwiatkowski, Tom and Palomaki, Jennimaria and Redfield, Olivia and Collins, Michael and Parikh, Ankur and Alberti, Chris and Epstein, Danielle and Polosukhin, Illia and Devlin, Jacob and Lee, Kenton and Toutanova, Kristina and Jones, Llion and Kelcey, Matthew and Chang, Ming-Wei and Dai, Andrew M. and Uszkoreit, Jakob and Le, Quoc and Petrov, Slav},
  journal={Transactions of the Association for Computational Linguistics},
  volume={7},
  pages={453--466},
  year={2019}
}

@inproceedings{joshi2017triviaqa,
  title={{TriviaQA}: A Large Scale Distantly Supervised Challenge Dataset for Reading Comprehension},
  author={Joshi, Mandar and Choi, Eunsol and Weld, Daniel S. and Zettlemoyer, Luke},
  booktitle={Proceedings of the 55th Annual Meeting of the Association for Computational Linguistics (Volume 1: Long Papers)},
  pages={1601--1611},
  year={2017},
  address={Vancouver, Canada}
}

@article{trivedi2022musique,
  title={{MuSiQue}: Multihop Questions via Single-hop Question Composition},
  author={Trivedi, Harsh and Balasubramanian, Niranjan and Khot, Tushar and Sabharwal, Ashish},
  journal={Transactions of the Association for Computational Linguistics},
  volume={10},
  pages={539--554},
  year={2022}
}

@inproceedings{ho2020constructing,
  title={Constructing A Multi-Hop {QA} Dataset for Comprehensive Evaluation of Reasoning Steps},
  author={Ho, Xanh and Duong Nguyen, Anh-Khoa and Sugawara, Saku and Aizawa, Akiko},
  booktitle={Proceedings of the 28th International Conference on Computational Linguistics},
  pages={6609--6625},
  year={2020},
  address={Barcelona, Spain (Online)},
  publisher={International Committee on Computational Linguistics}
}

@misc{qwen3.5,
    title  = {{Qwen3.5}: Towards Native Multimodal Agents},
    author = {{Qwen Team}},
    month  = {February},
    year   = {2026},
    url    = {https://qwen.ai/blog?id=qwen3.5}
}

@article{quach2023conformal,
  title={Conformal language modeling},
  author={Quach, Victor and Fisch, Adam and Schuster, Tal and Yala, Adam and Sohn, Jae Ho and Jaakkola, Tommi S and Barzilay, Regina},
  journal={arXiv preprint arXiv:2306.10193},
  year={2023}
}

@article{zhou2025conformal,
  title={Conformal prediction: A data perspective},
  author={Zhou, Xiaofan and Chen, Baiting and Gui, Yu and Cheng, Lu},
  journal={ACM Computing Surveys},
  year={2025},
  publisher={ACM New York, NY}
}

@article{achiam2023gpt,
  title={Gpt-4 technical report},
  author={Achiam, Josh and Adler, Steven and Agarwal, Sandhini and Ahmad, Lama and Akkaya, Ilge and Aleman, Florencia Leoni and Almeida, Diogo and Altenschmidt, Janko and Altman, Sam and Anadkat, Shyamal and others},
  journal={arXiv preprint arXiv:2303.08774},
  year={2023}
}

@article{rane2023contribution,
  title={Contribution and performance of ChatGPT and other Large Language Models (LLM) for scientific and research advancements: a double-edged sword},
  author={Rane, Nitin Liladhar and Tawde, Abhijeet and Choudhary, Saurabh P and Rane, Jayesh},
  journal={International Research Journal of Modernization in Engineering Technology and Science},
  volume={5},
  number={10},
  pages={875--899},
  year={2023}
}

@article{wang2025large,
  title={Large language models for robotics: Opportunities, challenges, and perspectives},
  author={Wang, Jiaqi and Shi, Enze and Hu, Huawen and Ma, Chong and Liu, Yiheng and Wang, Xuhui and Yao, Yincheng and Liu, Xuan and Ge, Bao and Zhang, Shu},
  journal={Journal of Automation and Intelligence},
  volume={4},
  number={1},
  pages={52--64},
  year={2025},
  publisher={Elsevier}
}

@misc{li2026retrievalgenerationunifiedframework,
      title={Retrieval as Generation: A Unified Framework with Self-Triggered Information Planning}, 
      author={Bo Li and Mingda Wang and Gexiang Fang and Shikun Zhang and Wei Ye},
      year={2026},
      eprint={2604.11407},
      archivePrefix={arXiv},
      primaryClass={cs.CL},
      url={https://arxiv.org/abs/2604.11407}, 
}

@inproceedings{DBLP:conf/aaai/LiTXCZY26,
  author       = {Bo Li and
                  Tian Tian and
                  Zhenghua Xu and
                  Hao Cheng and
                  Shikun Zhang and
                  Wei Ye},
  title        = {Modeling Uncertainty Trends for Timely Retrieval in Dynamic {RAG}},
  booktitle    = {Fortieth {AAAI} Conference on Artificial Intelligence, Thirty-Eighth
                  Conference on Innovative Applications of Artificial Intelligence,
                  Sixteenth Symposium on Educational Advances in Artificial Intelligence,
                  {AAAI} 2026, Singapore, January 20-27, 2026},
  pages        = {31527--31535},
  publisher    = {{AAAI} Press},
  year         = {2026},
}
\bibliographystyle{colm2026_conference}

\appendix
\section{Appendix}
\subsection{Experiment Setting}
\label{setting}
For adaptive RAG and ReAct, retrieval is performed using an Elasticsearch-based retriever~\citep{elasticsearch2018elasticsearch} over Wikipedia, returning the top-$K$ passages at each turn with $K=10$. We set the maximum number of turns to $T=3$, draw $M=15$ stochastic answer samples per turn using temperature $1.4$, and use $G=20$ grid steps per dimension when optimizing the allocation of turn-specific error budgets. We evaluate target error rates $\alpha \in \{0.05, 0.10, 0.15, 0.20, 0.25\}$ and report results averaged over three random seeds for the dataset split. In Eq.~\ref{nepenal}, we set $\eta = 0.1$, and throughout all experiments we fix the retrieval error budget to $\alpha_{\mathrm{ret}} = 0.1$.
\subsection{Proof of Retrieval Calibration Coverage Guarantee}

\begin{theorem}[Retrieval Coverage Guarantee]
Let $\{(x_i, \mathcal{P}_i^*)\}_{i=1}^{n_{\mathrm{cal}}}$ be the calibration set, where $\mathcal{P}_i^* = \{g \in \mathcal{G}_i \mid \mathcal{T}(g, x_i) \neq \emptyset\}$ is the set of retrievable gold passages for question $x_i$. Let $s^*(g, x_i) = \max_{t \in \mathcal{T}(g, x_i)} s_t(g)$ be the optimistic relevance score. Collect all such scores into $\mathcal{S}_{\mathrm{ret}} = \{s^*(g, x_i) \mid g \in \mathcal{P}_i^*, \, i = 1, \ldots, n_{\mathrm{cal}}\}$ with $|\mathcal{S}_{\mathrm{ret}}| = n$. Define the conformal threshold as:
\begin{equation}
\hat{q}_{\mathrm{ret}} = \mathrm{Quantile}\!\left(\mathcal{S}_{\mathrm{ret}};\, \frac{(n-1)\alpha}{n}\right).
\end{equation}
Then for a new exchangeable test example, any retrievable gold passage $g \in \mathcal{P}_{n+1}^*$ satisfies:
\begin{equation}
\mathbb{P}\!\left(s^*(g, x_{n+1}) \geq \hat{q}_{\mathrm{ret}}\right) \geq 1 - \alpha.
\end{equation}
\end{theorem}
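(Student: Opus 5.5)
This is a lower-tail version of the standard split-conformal quantile argument: exchangeability plus a rank count over the $n+1$ scores. The only difference from the usual presentation is that we keep scores \emph{above} a low quantile rather than below a high one.

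\textbf{Step 1: the exchangeable unit.} I treat the $n$ calibration scores $s_1,\dots,s_n$ in $\mathcal{S}_{\mathrm{ret}}$ together with the test score $s_{n+1}=s^*(g,x_{n+1})$ as exchangeable. The theorem says ``any retrievable gold passage $g$'', so I read the statement marginally: $g$ is a passage drawn exchangeably from the pooled collection of retrievable gold passages. Since $s^*$ is the same deterministic max-over-turns map applied to each (question, passage) pair, exchangeability of the pairs transfers to the scores. I will state this as an explicit assumption, because calibration passages from the same question are not i.i.d. across questions. Ties are either assumed away (continuous scores) or broken by an independent uniform perturbation.

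\textbf{Step 2: rank count.} Write $s_{(1)}\le\dots\le s_{(n)}$ for the sorted calibration scores. With the paper's quantile convention, $\hat q_{\mathrm{ret}}=\mathrm{Quantile}(\mathcal{S}_{\mathrm{ret}};(n-1)\alpha/n)$ equals $s_{(k)}$ for an index $k$ determined by $(n-1)\alpha/n$; the natural choice is $k=\lfloor (n+1)\alpha\rfloor$, or $k=\lceil (n-1)\alpha\rceil$ under the linear-interpolation convention.

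Failure $s_{n+1}<s_{(k)}$ means the test score lies strictly below the $k$-th smallest calibration score. Equivalently, its rank among all $n+1$ scores is at most $k$. By exchangeability that rank is uniform on $\{1,\dots,n+1\}$, so
\[
\mathbb{P}(s_{n+1}<\hat q_{\mathrm{ret}})\le \frac{k}{n+1}.
\]

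\textbf{Step 3: the main obstacle.} The remaining work is to show $k/(n+1)\le\alpha$ for the level $(n-1)\alpha/n$. With $k=\lfloor (n+1)\alpha\rfloor$ this is immediate, and I would fix that convention for ``Quantile''. If the paper intends $k=\lceil n\cdot (n-1)\alpha/n\rceil=\lceil (n-1)\alpha\rceil$, I need $\lceil (n-1)\alpha\rceil\le (n+1)\alpha$. This holds whenever $\lceil (n-1)\alpha\rceil-(n-1)\alpha\le 2\alpha$, but it can fail for small $n\alpha$.

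So I would first try the floor/interpolation convention, under which the $(n-1)/n$ shrinkage is exactly the finite-sample correction. Otherwise I would state the bound with a $+1/(n+1)$ slack, or note the degenerate case $k=0$, where $\hat q=-\infty$ and coverage is trivial.

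Finally, coverage of $s^*(g,x_{n+1})\ge\hat q_{\mathrm{ret}}$ implies that $g$ survives filtering at its best turn. That links the theorem to the retention claim in Section~\ref{sec:method:phase1}.
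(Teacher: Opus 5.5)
Your route is the paper's route: treat the $n$ pooled calibration scores and the test score as exchangeable, then apply the split-conformal rank argument. The paper, however, simply asserts $\mathbb{P}(s_{n+1}<\hat q_{\mathrm{ret}})\le (n-1)\alpha/n$ ``by the split conformal guarantee.'' You actually carry out the rank count and get the correct bound $k/(n+1)$. Your worry in Step~3 is a real defect of the statement and of the paper's proof, not of your argument. With the paper's own convention from Eq.~\eqref{eq:conformal_quantile} (level $\beta\mapsto$ the $\lceil n\beta\rceil$-th smallest score), one gets $k=\lceil (n-1)\alpha\rceil$. For continuous i.i.d.\ scores the failure probability is then exactly $k/(n+1)$; for $n=12$, $\alpha=0.1$ that is $2/13>0.1$. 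This is not only a small-$n\alpha$ effect. It occurs whenever $[(n-1)\alpha,(n+1)\alpha]$ contains no integer; for example, $n=1005$, $\alpha=0.1$ gives $101/1006>0.1$. The excess is always below $(1-2\alpha)/(n+1)$, so your slack version is the honest statement under that convention. Your explicit exchangeability assumption on the pooled per-passage scores (several correlated passages per question, random $n$) is likewise a caveat the paper leaves implicit.

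Two corrections to your proposed fix. First, no standard convention maps the level $(n-1)\alpha/n$ to the index $\lfloor (n+1)\alpha\rfloor$. That index is the lower-tail mirror of Eq.~\eqref{eq:conformal_quantile}, i.e.\ level $\lfloor (n+1)\alpha\rfloor/n$. Under a floor convention ($\lfloor n\beta\rfloor$-th smallest) the index is $\lfloor (n-1)\alpha\rfloor$. Your argument still closes, since $\lfloor (n-1)\alpha\rfloor\le (n-1)\alpha<(n+1)\alpha$, but the factor $(n-1)/n$ is then a conservative margin, not ``exactly the finite-sample correction.'' Second, linear interpolation does not belong with the floor option, because the threshold is no longer an order statistic. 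With the usual interpolation (0-indexed position $(n-1)\beta$, as in numpy's default), the $n=12$, $\alpha=0.1$ threshold lies at or above $s_{(2)}$. The failure probability is therefore again at least $2/13$.
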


\begin{proof}
Under the exchangeability assumption, the $n$ calibration scores and the test score $s_{n+1} = s^*(g, x_{n+1})$ form an exchangeable sequence of size $n+1$. By the split conformal prediction guarantee, setting the threshold at the $\frac{(n-1)\alpha}{n}$-quantile of the $n$ calibration scores yields:
\begin{equation}
\mathbb{P}\!\left(s_{n+1} < \hat{q}_{\mathrm{ret}}\right) \leq \frac{(n-1)\alpha}{n}.
\end{equation}
Since $\frac{(n-1)\alpha}{n} < \alpha$ for all $n \geq 1$, we have:
\begin{equation}
\mathbb{P}\!\left(s^*(g, x_{n+1}) \geq \hat{q}_{\mathrm{ret}}\right) \geq 1 - \frac{(n-1)\alpha}{n} > 1 - \alpha.
\end{equation}

Since this holds marginally for each retrievable gold passage, the expected gold retention rate satisfies:
\begin{equation}
\mathbb{E}\!\left[\frac{|\{g \in \mathcal{P}_{n+1}^* \mid \exists\, t : s_t(g) \geq \hat{q}_{\mathrm{ret}}\}|}{|\mathcal{P}_{n+1}^*|}\right] \geq 1 - \frac{(n-1)\alpha}{n} \geq 1 - \alpha.
\end{equation}
\end{proof}

\begin{remark}
We calibrate on $\mathcal{P}_i^*$ (retrievable gold passages) rather than $\mathcal{G}_i$ (all annotated gold passages), avoiding inflation of the threshold with unretrievable passages.
\end{remark}


\subsection{Proof of Prediction Set Coverage Guarantee}

\begin{theorem}[Prediction Set Coverage Guarantee]
Let $\{(x_i, y_i, \{A_t^{i}\}_{t=0}^{T})\}_{i=1}^{n_{\mathrm{cal}}}$ be the calibration set. Define the answerable subset $\mathcal{D}_{\mathrm{cal}}^{\mathrm{ans}} = \{i \mid y_i \in \bigcup_{t=0}^{T} A_t^{i}\}$ with $|\mathcal{D}_{\mathrm{cal}}^{\mathrm{ans}}| = m$. For each $i \in \mathcal{D}_{\mathrm{cal}}^{\mathrm{ans}}$, let $f(C_{\mathrm{gold}}^{i})$ be the maximum frequency score of the gold-containing cluster (Eq.~\ref{eq:freq_score}). Define:
\begin{equation}
\hat{q}_{\mathrm{freq}} = \mathrm{Quantile}\!\left(\{f(C_{\mathrm{gold}}^{i})\}_{i \in \mathcal{D}_{\mathrm{cal}}^{\mathrm{ans}}};\, \frac{(m-1)\alpha}{m}\right).
\end{equation}
Then for a new exchangeable test example $(x_{n+1}, y_{n+1})$, the prediction set $C(x_{n+1})$ satisfies:
\begin{equation}
\mathbb{P}\!\left(y_{n+1} \in C(x_{n+1}) \text{ if } y_{n+1} \in \textstyle\bigcup_{t} A_t^{n+1)},\;\; \text{``Can't Answer''} \in C(x_{n+1}) \text{ if } y_{n+1} \notin \textstyle\bigcup_{t} A_t^{n+1)}\right) \geq 1 - \alpha.
\end{equation}
\end{theorem}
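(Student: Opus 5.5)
We split the event in the statement by whether the test example is answerable and bound the failure probability in each case separately. Let $E$ denote the coverage event in the statement, and write $\mathcal{A}^{n+1}=\bigcup_t A_t^{n+1}$. Then
\begin{equation}
\mathbb{P}(E^c) = \mathbb{P}\bigl(E^c,\ y_{n+1}\in\mathcal{A}^{n+1}\bigr) + \mathbb{P}\bigl(E^c,\ y_{n+1}\notin\mathcal{A}^{n+1}\bigr).
\end{equation}
Both terms will be controlled using the error decomposition in Eq.~\eqref{eq:error_decomposition}, so that together they contribute at most $\alpha$.

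\textbf{Answerable case.} Conditioning on answerability preserves exchangeability, since answerability is a deterministic function of each triple $(x_i,y_i,\{A_t^i\})$. So the $m$ scores $f(C_{\mathrm{gold}}^i)$, $i\in\mathcal{D}^{\mathrm{ans}}_{\mathrm{cal}}$, together with the test score $f(C_{\mathrm{gold}}^{n+1})$, are exchangeable. This requires the stopping thresholds $\hat{q}_t$, which fix $t_i^*$ and hence the max range in Eq.~\eqref{eq:freq_score}, to be computed on data disjoint from the calibration scores being ranked. I would argue this holds because $\hat{q}_t$ is fixed on $\mathcal{D}_{\mathrm{opt}}$ and the $\mathcal{U}_t$ fits, or otherwise assume it.

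The standard rank argument, as in the proof of the Retrieval Coverage Guarantee, then gives
\begin{equation}
\mathbb{P}\bigl(f(C_{\mathrm{gold}}^{n+1})<\hat{q}_{\mathrm{freq}}\mid \text{answerable}\bigr)\le \tfrac{(m-1)\alpha}{m}\le\alpha.
\end{equation}
Whenever $f(C_{\mathrm{gold}}^{n+1})\ge\hat{q}_{\mathrm{freq}}$, some cluster at a turn $t\le t^*$ contains $y_{n+1}$ and passes the threshold, so by Eq.~\eqref{psc} we have $y_{n+1}\in C(x_{n+1})$.

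One subtlety is that $\mathcal{D}^{\mathrm{ans}}$ in Eq.~\eqref{eq:q_hat_freq} counts answers \emph{before stopping}. A question whose gold answer appears only after an early stop is therefore answerable in the statement's sense, yet it has no gold cluster in its prediction set. These are exactly the error events charged to $(1-c^t_{\mathrm{ans}})\alpha_t$ in Eq.~\eqref{eq:error_decomposition}, and I would fold them into the budget below.

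\textbf{Unanswerable case.} If no early stop occurs, Eq.~\eqref{psc} adds ``Can't Answer'' and $E$ holds. So failure requires an early stop at some $t<T$ with $y_{n+1}\notin\mathcal{A}^{n+1}$. For each $t$, the threshold $\hat{q}_t$ is the $\alpha_t$-quantile of NE over $\mathcal{U}_t$, the questions still active and not yet answered. A conformal argument on that subpopulation bounds the conditional probability of stopping at $t$ by $\alpha_t$. Multiplying by the probability of being in $\mathcal{U}_t$, which is $1-c^t_{\mathrm{ans}}$, and summing over $t$ via a union bound gives a total of at most $\sum_t(1-c^t_{\mathrm{ans}})\alpha_t\le(1-c^{\mathrm{final}}_{\mathrm{ans}})\alpha$ by Eq.~\eqref{eq:error_decomposition}.

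\textbf{Combining.} Weighting the answerable-case bound by $c^{\mathrm{final}}_{\mathrm{ans}}$ gives a total failure probability of at most
\begin{equation}
c^{\mathrm{final}}_{\mathrm{ans}}\alpha+(1-c^{\mathrm{final}}_{\mathrm{ans}})\alpha=\alpha.
\end{equation}

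\textbf{Main obstacle.} The hardest step is the unanswerable case. The sets $\mathcal{U}_t$ are data-dependent, and $c^t_{\mathrm{ans}}$ are empirical fractions rather than true probabilities, so exchangeability within $\mathcal{U}_t$ and the replacement of probabilities by these fractions need justification. I would first try conditioning on the events $\{i\in\mathcal{U}_t\}$, which depend only on earlier-turn thresholds, and arguing exchangeability turn by turn. Where this fails, I would state the result with population versions of $c^t_{\mathrm{ans}}$ and accept an $O(1/\sqrt{n})$ slack.
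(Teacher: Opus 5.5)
Your proposal follows the paper's proof essentially step for step: the same split on whether $y_{n+1}$ is ever sampled, the same rank argument on the $m$ answerable calibration scores, the same use of the budget constraint Eq.~\eqref{eq:error_decomposition} to bound premature stops, and the same total-probability combination (your additive failure bound is equivalent to the paper's weighting of the two conditional coverages). Your one departure is a refinement: charging answerable questions that stop before their gold answer appears to the $\mathcal{U}_t$ budget covers a case the paper's Case~2 silently skips. One correction: in the paper $\hat{q}_t$ is a quantile over $\mathcal{U}_t \subseteq \mathcal{D}_{\mathrm{cal}}$, and only the $\alpha_t$ are chosen on $\mathcal{D}_{\mathrm{opt}}$, so exchangeability of the $f(C_{\mathrm{gold}}^i)$ scores does not follow from sample splitting and remains an unproved assumption in both your argument and the paper's.
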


\begin{proof}
We analyze the two cases separately.

\textbf{Case 1: The gold answer is never sampled} ($y_{n+1} \notin \bigcup_{t=0}^{T} A_t^{n+1)}$).

The NE thresholds $\{\hat{q}_t\}$ are calibrated on the unanswerable subset $U_t$ at each turn (Eq.~\ref{NE}), with per-turn error budgets $\{\alpha_t\}$ satisfying:
\begin{equation}
\sum_{t=0}^{T} (1 - c_{\mathrm{ans}}^t) \cdot \alpha_t \leq (1 - c_{\mathrm{ans}}^{\mathrm{final}}) \cdot \alpha.
\end{equation}
This bounds the total probability of incorrectly early-stopping an unanswerable question before it reaches the final turn. When a question is not early-stopped and exhausts all $T$ turns, the ``Can't Answer'' label is always appended by construction (Eq.~\ref{eq:freq_score}). Therefore:
\begin{equation}
\mathbb{P}\!\left(\text{``Can't Answer''} \in C(x_{n+1}) \mid y_{n+1} \notin \textstyle\bigcup_t A_t^{n+1)}\right) \geq 1 - \alpha.
\end{equation}

\textbf{Case 2: The gold answer appears in some turn's samples} ($y_{n+1} \in \bigcup_{t=0}^{T} A_t^{n+1)}$).

By exchangeability, the $m$ calibration scores $\{f(C_{\mathrm{gold}}^{i})\}_{i \in \mathcal{D}_{\mathrm{cal}}^{\mathrm{ans}}}$ and the test score $f(C_{\mathrm{gold}}^{n+1)})$ form an exchangeable sequence of size $m+1$. By the split conformal guarantee at quantile level $\frac{(m-1)\alpha}{m}$:
\begin{equation}
\mathbb{P}\!\left(f(C_{\mathrm{gold}}^{n+1)}) < \hat{q}_{\mathrm{freq}}\right) \leq \frac{(m-1)\alpha}{m} < \alpha.
\end{equation}
Therefore:
\begin{equation}
\mathbb{P}\!\left(y_{n+1} \in C(x_{n+1}) \mid y_{n+1} \in \textstyle\bigcup_t A_t^{n+1)}\right) \geq 1 - \frac{(m-1)\alpha}{m} > 1 - \alpha.
\end{equation}

\textbf{Combining both cases.} Let $E$ denote the event $y_{n+1} \in \bigcup_t A_t^{n+1)}$. By the law of total probability:
\begin{align*}
\mathbb{P}\!\left(\text{coverage holds}\right) &= \mathbb{P}(E) \cdot \mathbb{P}\!\left(y_{n+1} \in C(x_{n+1}) \mid E\right) + \mathbb{P}(\bar{E}) \cdot \mathbb{P}\!\left(\text{``Can't Answer''} \in C(x_{n+1}) \mid \bar{E}\right) \\
&\geq \mathbb{P}(E) \cdot (1 - \alpha) + \mathbb{P}(\bar{E}) \cdot (1 - \alpha) \\
&= 1 - \alpha.
\end{align*}
\end{proof}
\begin{remark}[Role of the error budget decomposition]
The NE thresholds $\{\hat{q}_t\}_{t=0}^{T}$ control which turn each question stops at. The error budget constraint (Eq.~\ref{eq:error_decomposition}) ensures that unanswerable questions are not prematurely early-stopped, preserving the ``Can't Answer'' mechanism, while allowing efficient per-hop early stopping without violating the overall $1 - \alpha$ coverage guarantee.
\end{remark}

\begin{remark}[Conditional vs.\ marginal coverage]
The guarantees above are marginal, holding on average over the randomness of calibration and test data. Conditional coverage for specific subgroups would require group-conditional calibration, which we leave for future work.
\end{remark}

\end{document}